\DeclareMathOperator*{\argmax}{arg\,max}
\DeclareMathOperator*{\argmin}{arg\,min}
\newcommand{\dldx}{\frac{\partial r}{\partial s}}
\newcommand{\dfdx}{\frac{\partial f}{\partial s}}
\newcommand{\dfdu}{\frac{\partial f}{\partial a}}
\newcommand{\dmudx}{\frac{\partial \mu}{\partial s}}
\newcommand{\MDP}{\mathcal{M}}
\newcommand{\sspace}{\mathcal{S}}
\newcommand{\aspace}{\mathcal{A}}
\begin{document}
\mainmatter              % start of a contribution

\title{Hybrid Control for Learning Motor Skills}
\titlerunning{Hybrid Control for Learning Motor Skills}  % abbreviated title (for running head)

\author{Ian Abraham\inst{1} \and Alexander Broad\inst{2} \and Allison Pinosky\inst{1} \and \\ Brenna Argall\inst{1,2} \and Todd D. Murphey\inst{1}
    \thanks{ \smaller
        This material is based upon work supported by the National Science Foundation under Grants CNS 1837515.
        Any opinions, findings and conclusions or recommendations expressed in this material are those of the authors and
        do not necessarily reflect the views of the aforementioned institutions.
        For videos of results and code please visit \url{https://sites.google.com/view/hybrid-learning-theory}.
        }
}
\authorrunning{Abraham et al.} % abbreviated author list (for running head)
%
%%%% list of authors for the TOC (use if author list has to be modified)
\tocauthor{Ian Abraham, Alexander Broad,
Allison Pinosky, Brenna Argall, and Todd D. Murphey}

\institute{
    Department of Mechanical Engineering \\
    \and
    Department of Electrical Engineering \\
    and Computer Science \\
    Northwestern University, Evanston, IL 60208, USA \\
    \email{i-abr@u.northwestern.edu, alexsbroad@gmail.com,
    apinosky@u.northwestern.edu, t-murphey@northwestern.edu, brenna.argall@northwestern.edu}
}

\maketitle

\vspace{-10mm}

\begin{abstract}

    We develop a hybrid control approach for robot learning based on combining learned predictive models with
    experience-based state-action policy mappings to improve the learning capabilities of robotic systems.  Predictive
    models provide an understanding of the task and the  physics (which improves sample-efficiency), while
    experience-based policy mappings are treated as ``muscle memory'' that encode favorable actions as experiences that
    override planned actions. Hybrid control tools are used to create an algorithmic approach for combining learned
    predictive models with experience-based learning. Hybrid learning is presented as a method for efficiently learning
    motor skills by systematically combining and improving the performance of predictive models and experience-based
    policies. A deterministic variation of hybrid learning is derived and extended into a stochastic implementation
    that relaxes some of the key assumptions in the original derivation. Each variation is tested on experience-based
    learning methods (where the robot interacts with the environment to gain experience) as well as imitation learning
    methods (where experience is provided through demonstrations and tested in the environment). The results show that
    our method is capable of improving the performance and sample-efficiency of learning motor skills in a variety of
    experimental domains.

\keywords{Hybrid Control Theory, Optimal Control, Learning Theory}
\vspace{-10mm}
\end{abstract}

\vspace{-3mm}
\section{Introduction}
\vspace{-2mm}

    Model-based learning methods possess the desirable trait of being being
    ``sample-efficient''~\cite{williams2017information, chua2018deep, abraham2020modelbasedgen} in solving robot
    learning tasks. That is, in contrast to experience-based methods (i.e., model-free methods that learn a direct
    state-action map based on experience), model-based methods require significantly less data to learn a control
    response to achieve a desired task. However, a downside to model-based learning is that these models are often
    highly complex and require special structure to model the necessary intricacies~\cite{nagabandi2018neural,
    havens2019learning,
    sharma2019dynamics,abraham2020modelbasedgen,Abraham-RSS-17,abraham2019active}. Experience-based methods, on the
    other hand, approach the problem of learning skills by avoiding the need to model the environment (or dynamics) and
    instead learn a mapping (policy) that returns an action based on prior experience~\cite{schulman2017proximal,
    haarnoja2018soft}. Despite having better performance than model-based methods, experience-based approaches require a
    significant amount of data and diverse experience to work properly~\cite{chua2018deep}. How is it then, that humans
    are capable of rapidly learning tasks with a limited amount of experience? And is there a way to enable robotic
    systems to achieve similar performance?

    Some recent work tries to address these questions by exploring ``how'' models of environments are structured by
    combining probabilistic models with deterministic components~\cite{chua2018deep}. Other work has explored using
    latent-space representations to condense the complexities~\cite{havens2019learning,
    sharma2019dynamics}. Related methods use high fidelity Gaussian Processes to create models, but are limited by the
    amount of data that can be collected~\cite{deisenroth2011pilco}. Finally, some researchers try to improve
    experience-based methods by adding exploration as part of the objective~\cite{pathak2017curiosity}. However, these
    approaches often do not formally combine the usage of model-based planning with experience-based learning.

    Those that do combine model-based planning and experience-based learning tend to do so in
    stages~\cite{chebotar2017path, bansal2017mbmf, nagabandi2018neural}. First, a
    model is used to collect data for a task to jump-start what data is collected. Then, supervised learning is used to
    update a policy~\cite{levine2014GPS, chebotar2017path} or an experience-based method is used
    to continue the learning from that stage~\cite{nagabandi2018neural}. While novel, this approach does not
    algorithmically combine the two robot learning approaches in an optimal manner. Moreover, the model is often used as
    an oracle, which provides labels to a base policy. As a result, the model-based method is not improved, and the
    resulting policy is under-utilized. Our approach is to algorithmically combine model-based and experience-based learning
    by using the learned model as a gauge for how well an experience-based policy will behave, and then optimally update
    the resulting actions. Using hybrid control as the foundation for our approach, we derive a controller that
    optimally uses model-based actions when the policy is uncertain, and allows the algorithm to fall back on the
    experience-based policy when there exists high confidence actions that will result in a favorable outcome. As a
    result, our approach does not rely on improving the model (but can easily integrate high fidelity models), but
    instead optimally combines the policy generated from model-based and experience-based methods to achieve high
    performance. Our contribution can be summed up as the following:

    \begin{itemize}
        \item A hybrid control theoretic approach to robot learning
        \item Deterministic and stochastic algorithmic variations
        \item A measure for determining the agreement between learned model and policy
        \item Improved sample-efficiency and robot learning performance
        \item Diverse implementation using standard off-policy reinforcement learning~\cite{haarnoja2018soft} and behavior cloning~\cite{pomerleau1998autonomous}
    \end{itemize}

    The paper is structured as follows: Section~\ref{sec:background} provides background knowledge of the problem
    statement and its formulation; Section~\ref{sec:hybrid_learning_theory} introduces our approach and derives
    both deterministic and stochastic variations of our algorithm; Section~\ref{sec:results} provides simulated
    results and comparisons as well as experimental validation of our approach; and Section~\ref{sec:conclusion}
    concludes the work.

\vspace{-3mm}
\section{Background} \label{sec:background}
\vspace{-2mm}

    \paragraph{Markov Decision Processes:} The robot learning problem is often formulated as a Markov Decision process
    (MDP) $\MDP = \{ \sspace, \aspace, r, p \}$, which represents a set of accessible continuous states $s \in \sspace$
    the robot can be in, continuous bounded actions $a \in \aspace$ that a robotic agent may take, rewards $r$, and a
    transition probability $p(s_{t+1} \ | \ s_t, a_t)$, which govern the probability of transitioning from one state
    $s_t$ to the next $s_{t+1}$ given an action $a_t$ applied at time $t$ (we assume a deterministic transition model).
    The goal of the MDP formulation is to find a mapping from state to action that maximizes the total reward acquired
    from interacting in an environment for some fixed amount of time. This can also be written as the following
    objective
    \begin{equation}\label{eq:mdp_obj}
        \pi^\star = \argmax_{\pi} \mathbb{E}_{a \sim \pi(\cdot \ | \ s)} \left[ \sum_{t=0}^{T-1} r(s_t) \right]
    \end{equation}
    where the solution is an optimal policy $\pi^\star$. In most common experience-based reinforcement learning
    problems, a stochastic policy $a \sim \pi(\cdot \ | \ s)$ is learned such that it maximizes the reward $r$ at a
    state $s$. Model-based approaches solve the MDP problem by modeling the transition function $s_{t+1} = f(s_t, a_t)$
    and the reward function $r_t = r(s_t)$\footnote{We exclude the dependency on the action for clarity as one could
    always append the state vector with the action and obtain the dependency.}, and either use the model to construct a
    policy or directly generate actions through model-based planning~\cite{chua2018deep}. If the transition model and
    the reward function are known, the MDP formulation becomes an optimal control problem where one can use any set of
    existing methods~\cite{li2004iterative} to solve for the best set of actions (or policy) that
    maximizes the reward (often optimal control problems are specified using a cost instead of reward, however, the
    analysis remains the same).

    \paragraph{Hybrid Control Theory for Mode Scheduling:} In the problem of mode scheduling, the goal is to maximize a
    reward function through the synthesis of two (or more) control strategies (modes). This is achieved by optimizing
    when one should switch which policy is in control at each instant (which is developed from hybrid control theory).
    The policy switching often known as mode switching~\cite{axelsson2008gradient}. For example, a vertical take-off and
    landing vehicle switching from landing to flight mode is a common example used for an aircraft switching from flight
    mode to landing mode. Most problems are written in continuous time subject to continuous time dynamics of the form
    \begin{equation} \label{eq:cont_model}
        \dot{s}(t) = f(s(t), a(t)) = g(s(t)) + h(s(t))a(t)
    \end{equation}
    where $f(s, a) : \sspace \times \aspace \to \sspace $ is the (possibly nonlinear) transition function divided into
    the free dynamics $g(s) : \sspace \to \sspace$ and the control map $h(s) : \sspace \to \sspace \times \aspace$, and
    $a(t)$ is assumed to be generated from some default behavior. The objective function (similar to
    Eq.(\ref{eq:mdp_obj}) but often written in continuous time) is written as the deterministic function of the state
    and control:
    \begin{equation}\label{eq:cont_t_obj}
        \argmax_{\tau, \lambda} \mathcal{J} = \int_{t=0}^{t_H} r(s(t)) dt \ \ \ \text{subject to} \ \ \ \dot{s}(t) = f(s(t), a(t)), \ s(0) = s_0
    \end{equation}
    where
    \begin{equation}\label{eq:switching_action}
        a(t) =
            \begin{cases}
                \hat{a}(t),         & \text{if } t \in [ \tau, \tau + \lambda ] \\
                a_\text{def}(t)     & \text{otherwise}
            \end{cases},
    \end{equation}
    $t_H$ is the time horizon (in seconds), and $s(t)$ is generated from some initial condition $s(0)$ using the model
    (\ref{eq:cont_model}) and action sequence (\ref{eq:switching_action}). The goal in hybrid control theory is to find
    the optimal time $\tau$ and application duration $\lambda$ to switch from some default set of actions $a_\text{def}$ to
    another set of actions $\hat{a}$ that best improves the performance in (\ref{eq:cont_t_obj}) subject to the dynamics
    (\ref{eq:cont_model}).

    The following section derives the algorithmic combination of the MDP learning formulation with the
    hybrid control foundations into a joint hybrid learning approach.

    \begin{algorithm}[!h]
    \caption{Hybrid Learning (deterministic)}
    \begin{algorithmic}[1]
        \footnotesize
        \State Randomly initialize continuous differentiable models $f$, $r$ with parameters $\psi$ and policy $\pi$ with
        parameter $\theta$. Initialize memory buffer $\mathcal{D}$, prediction horizon parameter $t_H$, exploration noise
        $\varepsilon$.
        \While{task not done}
            \State reset environment and exploration noise $\varepsilon$
            \For{$i = 1, \ldots, T$}
                \State observe state $s(t_i)$
                \State $\triangleright$ simulation loop
                \For{$\tau_i \in \left[t_i, \ldots, t_i + t_H \right]$}
                    \State $\triangleright$ forward predict states using any integration method (Euler shown)
                    \State $s(\tau_{i+1}), r(\tau_i) = s(\tau_i) + f(s(\tau_i), \mu(s(\tau_i))) dt, \ r(s(\tau_i), \mu(s(\tau_i)))$
                \EndFor
                \State $\triangleright$ backwards integrate using $\dot{\rho}(t)$ defined in (\ref{eq:adjoint})
                \State $\rho(t_i + t_H) = \mathbf{0}$
                \For{$\tau_i \in \left[t_H + t_i, \ldots, t_i \right]$}
                    \State $\rho(\tau_{i-1}) = \rho(\tau_i) - \dot{\rho}(\tau_i) dt$
                \EndFor
                \State $a^\star(t_i) = \Sigma(s(t_i)) h(s(t_i))^\top \rho(t_i) + \mu(s(t_i)) + \varepsilon (t)$
                \State $\triangleright$ apply to robot
                \State append data $ \mathcal{D} \gets \{ s(t_i), a^\star (t_i), r_t, s(t_{i+1}) \} $
            \EndFor
            \State Update $f, r$ by sampling $N$ data points from $\mathcal{D}$ using any regression method
            \State Update $\pi$ using any experience-based method
        \EndWhile
    \end{algorithmic}
    \label{alg:hlt_det}
    \end{algorithm}

\vspace{-3mm}
\section{Hybrid Learning} \label{sec:hybrid_learning_theory}
\vspace{-2mm}

    The goal of this section is to introduce hybrid learning as a method for optimally utilizing model-based and
    experience-based (policy) learning. We first start with the deterministic variation of the algorithm that provides
    theoretical proofs, which describe the foundations of our method. The stochastic variation is then derived as a
    method for relaxing the assumptions made in the deterministic variation. The main theme in both the deterministic
    and stochastic derivations is that the learning problem is solved \textit{indirectly}. That is, we solve the (often
    harder) learning problem by instead solving sub-problems whose solutions imply that the harder problem is solved.

    \vspace{-2mm}
    \subsection{Deterministic} Consider the continuous time formulation of the objective and dynamics in
    (\ref{eq:cont_model}) and (\ref{eq:cont_t_obj}) with the MDP formation where $f$ and $r$ are learned using arbitrary
    regression methods (e.g., neural network least squares, Gaussian processes), and $\pi$ is learned through a
    experience-based approach (e.g., policy gradient~\cite{sutton2000policy}). In addition, let us assume
    that in the default action in (\ref{eq:switching_action}) is defined as the mean of the policy $\pi$ where we ignore
    uncertainty for the time being\footnote{We will add the uncertainty into the hybrid problem in the stochastic
    derivation of our approach for hybrid learning}. That is, $a_\text{def}(t) = \mu(s(t))$ is defined by assuming that the
    policy has the form $\pi(a \ | \ s) = \mathcal{N}(\mu(s), \Sigma(s))$, where $\mathcal{N}$ is a normal distribution
    and $\mu(s)$, $\Sigma(s)$ are the mean and variance of the policy as a function of state. As we do not have the form
    of $a^\star$, let us first calculate how sensitive (\ref{eq:cont_t_obj}) is at any $\tau$ to switching from $\mu(s)
    \to a^\star$ for an infinitely small $\lambda$\footnote{We avoid the problem of instability of the robotic system
    from switching control strategies as later we develop and use the best action for all $\tau \in [0, t_H]$ instead of
    searching for a particular time when to switch.}.

    \begin{lemma} \label{lemma:mode_insertion}
        Assume that $f$, $r$, and $\mu$ are differentiable and continuous in time. The sensitivity of
        (\ref{eq:cont_t_obj}) (also known as the mode insertion gradient~\cite{axelsson2008gradient}) with respect to the duration time $\lambda$
        from switching between $\mu(s)$ to $\hat{a}$ and any time $\tau \in [0, t_H]$ is defined as
        \begin{equation} \label{eq:mode_insertion}
            \frac{\partial}{\partial \lambda} \mathcal{J}(\tau) = \rho(\tau)^\top (f_2 - f_1)|_\tau
        \end{equation}
        where $f_1 = f(s(t), \mu(s(t)))$ and $f_2 = f(s(t), \hat{a}(t))$, and $\rho(t) \in \sspace$ is the adjoint
        variable which is the the solution to the the differential equation
        \begin{equation}\label{eq:adjoint}
            \dot{\rho}(t) = -\dldx - \left( \dfdx + \dmudx^\top \dfdu \right)^\top \rho(t)
        \end{equation}
        with terminal condition $\rho(t_H) = \mathbf{0}$.
    \end{lemma}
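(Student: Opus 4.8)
The plan is to regard $\mathcal{J}$ in (\ref{eq:cont_t_obj}) as a function of the insertion length $\lambda$ alone, with the switch time $\tau$ held fixed, and to compute its one-sided derivative at $\lambda = 0^{+}$ by propagating the first-order (``needle'') perturbation that the inserted control $\hat{a}$ induces on the state trajectory. Let $s_\lambda(\cdot)$ denote the trajectory generated by (\ref{eq:cont_model}) under the switched action (\ref{eq:switching_action}), so that $s_0(\cdot)$ is the nominal trajectory driven by $a_\text{def} = \mu(s)$, and define the variational state $\delta s(t) = \left. \partial s_\lambda(t) / \partial \lambda \right|_{\lambda = 0^{+}}$. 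Since $\hat{a}$ is applied only on $[\tau, \tau + \lambda]$, the trajectories $s_\lambda$ and $s_0$ coincide on $[0,\tau]$ for every $\lambda$, hence $\delta s(t) = \mathbf{0}$ for $t \in [0,\tau]$; this is what localizes the computation.

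First I would pin down the initial condition for $\delta s$ at the right endpoint of the insertion window. On $[\tau,\tau+\lambda]$ the state obeys $\dot{s}_\lambda = f(s_\lambda, \hat{a})$, while on that same window the nominal state obeys $\dot{s}_0 = f(s_0,\mu(s_0))$; expanding both to first order in $\lambda$ about $s(\tau) = s_\lambda(\tau) = s_0(\tau)$ gives $s_\lambda(\tau+\lambda) - s_0(\tau+\lambda) = \lambda\,(f_2 - f_1)\big|_\tau + o(\lambda)$, so that $\delta s(\tau^{+}) = (f_2 - f_1)\big|_\tau$ with $f_1 = f(s,\mu(s))$ and $f_2 = f(s,\hat a)$ as in the statement. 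For $t > \tau$ the control has reverted to $\mu(s_\lambda(t))$, so $s_\lambda$ solves the closed-loop ODE $\dot{s}_\lambda = f(s_\lambda,\mu(s_\lambda))$ with perturbed data at $\tau$; differentiating this identity with respect to $\lambda$ and applying the chain rule yields the linear variational equation $\dot{\delta s}(t) = \left( \dfdx + \dmudx^\top \dfdu \right)\!\big|_t\,\delta s(t)$ on $(\tau, t_H]$, i.e. $\delta s$ is transported by the Jacobian of the closed-loop vector field.

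Next I would differentiate the objective and eliminate $\delta s$ with the adjoint. Because $r$ depends on $\lambda$ only through $s_\lambda$ and $\delta s \equiv \mathbf{0}$ on $[0,\tau]$, Leibniz's rule gives $\frac{\partial}{\partial\lambda}\mathcal{J}(\tau) = \int_{\tau}^{t_H} \dldx\big|_t\,\delta s(t)\,dt$. Introduce $\rho(t)$ as the solution of the adjoint equation (\ref{eq:adjoint}) with terminal value $\rho(t_H) = \mathbf{0}$ and differentiate the pairing $\rho(t)^\top\delta s(t)$ along the flow: using (\ref{eq:adjoint}) and the variational equation above, the two occurrences of $\left( \dfdx + \dmudx^\top \dfdu \right)$ cancel and one is left with $\frac{d}{dt}\big(\rho(t)^\top\delta s(t)\big) = -\,\dldx\big|_t\,\delta s(t)$. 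Integrating over $[\tau,t_H]$ and using $\rho(t_H) = \mathbf{0}$ together with $\delta s(\tau^{+}) = (f_2 - f_1)\big|_\tau$ gives $\int_{\tau}^{t_H}\dldx\,\delta s\,dt = \rho(\tau)^\top (f_2 - f_1)\big|_\tau$, which is exactly (\ref{eq:mode_insertion}).

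The part that needs genuine care — the rest being the standard adjoint/integration-by-parts bookkeeping — is the needle-variation limit: justifying that $\lambda \mapsto s_\lambda(t)$ is right-differentiable at $\lambda = 0$ uniformly on $[\tau,t_H]$, that the $o(\lambda)$ remainders are truly higher order, and that differentiation under the integral sign is legitimate. This is precisely where the hypotheses of Lemma~\ref{lemma:mode_insertion} enter: continuity and differentiability of $f$, $r$, and $\mu$ make the closed-loop field $f(\,\cdot\,,\mu(\,\cdot\,))$ continuously differentiable, so its flow depends $C^1$ on initial conditions and the variational equation is well posed, while continuity in time controls the endpoint expansion at $\tau$. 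Once these regularity facts are in hand, the identity follows as above.
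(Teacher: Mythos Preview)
Your proposal is correct and follows essentially the same route as the paper's proof in Appendix~\ref{proof:mode_insert}: a needle variation at $\tau$ produces the jump $\delta s(\tau^+) = (f_2 - f_1)|_\tau$, the variation is propagated forward by the linearization of the closed-loop field $f(\cdot,\mu(\cdot))$, and the adjoint $\rho$ is introduced so that the identity $\tfrac{d}{dt}(\rho^\top \delta s) = -\dldx\,\delta s$ collapses the integral to $\rho(\tau)^\top(f_2-f_1)|_\tau$. The only difference is expository --- you are somewhat more explicit about the regularity needed to justify differentiation under the integral and the $o(\lambda)$ remainder, which the paper handles by citing the standard mode-insertion literature.
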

    \begin{proof}
        See Appendix~\ref{proof:mode_insert} for proof. \qed
    \end{proof}

    Lemma~\ref{lemma:mode_insertion} gives us the proof and definition of the mode insertion gradient
    (\ref{eq:mode_insertion}), which tells us the infinitesimal change in the objective function when switching from the
    default policy behavior to some other arbitrarily defined control $\hat{a}$ for a small time duration $\lambda$. We
    can directly use the mode insertion gradient to see how an arbitrary action changes the performance of the task from
    the policy that is being learned. However, in this work we use the mode insertion gradient as a method for obtaining
    the best action the robot can take given the learned predictive model of the dynamics and the task rewards. We can
    be more direct in our approach and ask the following question. \textbf{Given a suboptimal policy $\pi$, what is the
    best action the robot can take to maximize (\ref{eq:cont_t_obj}), at any time $t\in [0, t_H]$, subject to the
    uncertainty (or certainty) of the policy defined by $\Sigma(s)$?}

    We approach this new sub-problem by specifying the auxiliary optimization problem:
    \begin{equation}\label{eq:aux_opt}
        a^\star(t) = \argmax_{\hat{a}(t) \ \forall t \in [0, t_H]} \int_0^{t_H} \frac{\partial}{\partial \lambda} \mathcal{J}(t)
        + \log \pi \left( \hat{a} (t) \ | \ s(t) \right) dt
    \end{equation}
    where the idea is to maximize the mode insertion gradient (i.e., find the action that has the most impact in changing the objective)
    subject to the $\log \pi$ term that ensures the generated action $a^\star$ is penalized for deviating from the policy,
    when there is high confidence that was based on prior experience.

    \begin{theorem}\label{thm:optimal_action}

        Assuming that $f$, $r$, and $\pi$ are continuous and differentiable in $s, a$ and $t$, the best possible action that improves the
        performance of (\ref{eq:cont_t_obj}) and is a solution to (\ref{eq:aux_opt}) for any time $t \in [ 0, t_H ]$ is
        \begin{equation}\label{eq:optimal_action}
            a^\star(t) = \Sigma(s(t)) h(s(t))^\top \rho(t) + \mu(s(t))
        \end{equation}
        where $\rho(t)$ is defined in (\ref{eq:adjoint}) and $h(s): \mathbb{R}^n \to \mathbb{R}^{n \times m}$ is the
        affine mapping from actions to the dynamics.

    \end{theorem}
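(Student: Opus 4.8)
The plan is to reduce the functional optimization in (\ref{eq:aux_opt}) to a pointwise concave quadratic program in the instantaneous control value $\hat a(t)$, and then read off the maximizer from its first-order condition. First I would substitute the mode insertion gradient from Lemma~\ref{lemma:mode_insertion} into the integrand. Using the control-affine structure $f(s,a) = g(s) + h(s)a$ from (\ref{eq:cont_model}), the difference $f_2 - f_1 = f(s,\hat a) - f(s,\mu(s))$ collapses to $h(s(t))\big(\hat a(t) - \mu(s(t))\big)$, so that $\frac{\partial}{\partial\lambda}\mathcal{J}(t) = \rho(t)^\top h(s(t))\big(\hat a(t) - \mu(s(t))\big)$. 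Crucially, $\rho(t)$ solves the adjoint equation (\ref{eq:adjoint}), which depends only on the nominal trajectory generated by $\mu$ and not on $\hat a$; hence $\rho$ is a fixed known signal for the purpose of this optimization.

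Next I would expand the policy term. Since $\pi(a \mid s) = \mathcal{N}(\mu(s), \Sigma(s))$, we have $\log\pi(\hat a(t)\mid s(t)) = -\tfrac12 \big(\hat a(t) - \mu(s(t))\big)^\top \Sigma(s(t))^{-1}\big(\hat a(t) - \mu(s(t))\big) + c(s(t))$, where $c$ collects the normalization term that is independent of $\hat a$. The integrand of (\ref{eq:aux_opt}) then becomes, up to a term independent of $\hat a$,
\[
\mathcal{L}_t(\delta a) \;=\; \rho(t)^\top h(s(t))\,\delta a \;-\; \tfrac12\,\delta a^\top \Sigma(s(t))^{-1}\,\delta a, \qquad \delta a := \hat a(t) - \mu(s(t)).
\]

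The key observation is that (\ref{eq:aux_opt}) contains no dynamic constraint coupling the values of $\hat a$ at different times — the state trajectory $s(\cdot)$ and adjoint $\rho(\cdot)$ are held at their nominal values — so the integral is maximized by maximizing $\mathcal{L}_t$ separately at each $t$. Each $\mathcal{L}_t$ is strictly concave in $\delta a$ because $\Sigma(s(t))^{-1}$ is positive definite, so the unique maximizer is characterized by stationarity, $\nabla_{\delta a}\mathcal{L}_t = h(s(t))^\top \rho(t) - \Sigma(s(t))^{-1}\,\delta a = 0$, i.e. $\hat a(t) = \Sigma(s(t)) h(s(t))^\top \rho(t) + \mu(s(t))$, which is exactly (\ref{eq:optimal_action}); the negative-definite Hessian $-\Sigma(s(t))^{-1}$ confirms it is a maximum rather than a saddle or minimum.

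I expect the only real subtlety — and hence the main thing to argue carefully — is this decoupling step: justifying that one may optimize the integrand pointwise rather than solving a genuine calculus-of-variations problem. This rests on the fact that, unlike the original control problem (\ref{eq:cont_t_obj}), the auxiliary objective (\ref{eq:aux_opt}) treats $s(\cdot)$ and $\rho(\cdot)$ as fixed exogenous signals (the mode insertion gradient is, by construction, the infinitesimal-$\lambda$ sensitivity evaluated along the nominal rollout), so a perturbation of $\hat a(t)$ at one instant does not propagate to other instants and the operations "integrate over $t$" and "maximize over $\hat a$" commute. The continuity and differentiability hypotheses on $f$, $r$, and $\pi$ are what make the integrand well-defined and this interchange legitimate.
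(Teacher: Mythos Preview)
Your argument is correct and follows essentially the same route as the paper's own proof: substitute the mode insertion gradient, use the control-affine structure so that $f_2-f_1 = h(s)(\hat a - \mu(s))$, expand the Gaussian $\log\pi$ term, and solve the pointwise first-order condition. You are in fact more careful than the paper, which simply ``takes the derivative with respect to the point-wise $\hat a$'' without explicitly justifying the decoupling or checking the second-order condition; your treatment of those points is a welcome addition rather than a departure.
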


    \begin{proof}
        Inserting the definition of the mode insertion gradient (\ref{eq:mode_insertion}) and taking the derivative of
        (\ref{eq:aux_opt}) with respect to the point-wise $\hat{a}$ and setting it to zero gives
        \begin{equation*}
             \rho^\top h(s) \left( \hat{a} - \mu(s) \right) -  \Sigma(s)^{-1} \left( \hat{a} - \mu(s) \right) = 0
        \end{equation*}
        where we drop the dependency on time for clarity.
        Solving for $\hat{a}$ gives the best actions $a^\star$
        \begin{equation*}
            a^\star(t) = \Sigma(s(t)) h(s(t))^\top \rho(t) + \mu(s(t))
        \end{equation*}
        which is the action that maximizes the mode insertion gradient subject to the certainty of the policy $\pi$
        for all $t \in [0, t_H ]. $\qed
    \end{proof}

    The proof in Theorem~\ref{thm:optimal_action} provides the best action that a robotic system can take given a
    default experience-based policy. Each action generated uses the sensitivity of changing the objective based on the
    predictive model's behavior while relying on the experience-based policy to regulate when the model information will
    be useful. We convert the result in Theorem~\ref{thm:optimal_action} into our first (deterministic) algorithm (see
    Alg.~\ref{alg:hlt_det}).

    \begin{algorithm}[!h]
    \caption{Hybrid Learning (stochastic)}
    \begin{algorithmic}[1]
        \footnotesize
        \State Randomly initialize continuous differentiable models $f$, $r$ with parameters $\psi$ and policy $\pi$ with
        parameter $\theta$. Initialize memory buffer $\mathcal{D}$, prediction horizon parameter $H$.
        \While{task not done}
            \State reset environment
            \For{$t = 1, \ldots, T-1$}
                \State observe state $s_t$
                \State $\triangleright$ simulation loop
                \For{$k \in \{ 0, \ldots, K-1 \}$}
                    \For{$\tau \in \{ 0, \ldots, H-1 \}$}
                        \State $v_\tau^k \sim \pi( \cdot \ | \ s_\tau^k)$
                        \State $\triangleright$ forward predict state and reward
                        \State $s_{\tau+1}^k, r_\tau^k = f(s_\tau^k, v_\tau^k), r(s_\tau^k, v_\tau^k)$
                        \State $j^k_\tau = r_\tau^k$
                    \EndFor
                \EndFor
                \State $\triangleright$ update actions
                \For{$\tau \in \{ 0, \ldots, T-1 \}$}
                    \State $\mathcal{J}(v_\tau^k) \gets \sum_{t=\tau}^{T-1} j_t^k$
                    \State $\delta a_\tau^k \gets v_\tau^k - a_\tau$
                    \State $a_\tau \gets a_\tau + \sum_{k=0}^{K-1} \omega(v_\tau^k) \delta a_\tau^k$
                \EndFor
                \State apply $a_0$ to robot and append data $ \mathcal{D} \gets \{ s_t, a_0, r_t, s_{t+1} \} $
            \EndFor
            \State Update $f, r$ by sampling $N$ data points from $\mathcal{D}$ using any regression method
            \State Update $\pi$ using any experience-based method
        \EndWhile
    \end{algorithmic}
    \label{alg:hlt_stoch}
    \end{algorithm}

    The benefit of the proposed approach is that we are able to make (numerically based) statements about the
    generated action and the contribution of the learned predictive models towards improving the task. Furthermore,
    we can even make the claim that (\ref{eq:optimal_action}) provides the best possible action given the current
    belief of the dynamics $f$ and the task reward $r$.

    \begin{corollary} \label{cor:Hamiltonian}

        Assuming that $\frac{\partial}{\partial a} \mathcal{H} \ne 0$ where $\mathcal{H} = r(s) + \log \pi(a \ | \ s) +
        \rho^\top f(s, a)$ is the control Hamiltonian for (\ref{eq:cont_t_obj}), then $\frac{\partial}{\partial \lambda}
        \mathcal{J} = \Vert h(s)^\top \rho \Vert_{\Sigma(s)} > 0$ and is zero when the policy satisfies
        the control Hamiltonian condition $\frac{\partial}{\partial a} \mathcal{H} = 0$.

    \end{corollary}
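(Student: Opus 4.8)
The plan is to verify the corollary by substituting the optimal action of Theorem~\ref{thm:optimal_action} back into the mode insertion gradient of Lemma~\ref{lemma:mode_insertion}, identifying the resulting quadratic form, and then relating its vanishing to the stationarity of the control Hamiltonian. First I would exploit the affine structure of the dynamics in (\ref{eq:cont_model}): since the free dynamics $g(s)$ cancel, $f_2 - f_1 = f(s,a^\star) - f(s,\mu(s)) = h(s)\left(a^\star - \mu(s)\right)$. Using $a^\star - \mu(s) = \Sigma(s)h(s)^\top\rho$ from (\ref{eq:optimal_action}) in (\ref{eq:mode_insertion}) then gives
\begin{equation*}
\frac{\partial}{\partial \lambda}\mathcal{J} = \rho^\top h(s)\,\Sigma(s)\,h(s)^\top\rho = \Vert h(s)^\top\rho\Vert_{\Sigma(s)},
\end{equation*}
and because $\Sigma(s)$ is a policy covariance it is symmetric positive (semi)definite, so this quantity is nonnegative and strictly positive precisely when $h(s)^\top\rho \ne 0$.

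Next I would connect the condition $h(s)^\top\rho \ne 0$ to the Hamiltonian hypothesis. Differentiating $\mathcal{H} = r(s) + \log\pi(a\mid s) + \rho^\top f(s,a)$ with respect to $a$ and using $\tfrac{\partial f}{\partial a} = h(s)$ yields $\tfrac{\partial}{\partial a}\mathcal{H} = \tfrac{\partial}{\partial a}\log\pi(a\mid s) + h(s)^\top\rho$. For the Gaussian policy $\pi(a\mid s) = \mathcal{N}(\mu(s),\Sigma(s))$ used throughout the deterministic derivation, $\tfrac{\partial}{\partial a}\log\pi(a\mid s) = -\Sigma(s)^{-1}(a-\mu(s))$, which vanishes at the default action $a = \mu(s)$. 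Hence, evaluated along the policy mean, $\tfrac{\partial}{\partial a}\mathcal{H} = h(s)^\top\rho$, so the hypothesis $\tfrac{\partial}{\partial a}\mathcal{H}\ne 0$ is exactly the statement $h(s)^\top\rho\ne 0$, which by the first paragraph forces $\tfrac{\partial}{\partial \lambda}\mathcal{J} > 0$; conversely, when the policy satisfies $\tfrac{\partial}{\partial a}\mathcal{H} = 0$ we get $h(s)^\top\rho = 0$ and the mode insertion gradient vanishes. I would close with the Pontryagin interpretation: the improvement direction encoded by (\ref{eq:optimal_action}) is exhausted exactly when the policy already meets the Hamiltonian's first-order optimality condition.

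The computation is routine, so the one point needing care — the main obstacle, such as it is — is to state unambiguously \emph{where} the Hamiltonian condition is evaluated: "the policy satisfies the control Hamiltonian condition" should be read as $\tfrac{\partial}{\partial a}\mathcal{H} = 0$ holding at $a = \mu(s)$, which is what makes the Gaussian $\log\pi$ gradient drop out and yields the clean equivalence with $h(s)^\top\rho = 0$. A secondary detail is to assume $\Sigma(s)$ is nondegenerate (strictly positive definite), which is what upgrades the nonnegativity of $\Vert h(s)^\top\rho\Vert_{\Sigma(s)}$ to the strict inequality claimed in the corollary.
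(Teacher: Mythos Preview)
Your proposal is correct and follows essentially the same route as the paper: substitute $a^\star$ from Theorem~\ref{thm:optimal_action} into the mode insertion gradient, cancel the free dynamics to obtain the quadratic form $\rho^\top h(s)\Sigma(s)h(s)^\top\rho$, and then differentiate $\mathcal{H}$ in $a$ to link vanishing of $h(s)^\top\rho$ with the Pontryagin stationarity condition. Your explicit remark that the Hamiltonian condition is to be read at $a=\mu(s)$ and that $\Sigma(s)\succ 0$ is needed for strict positivity is a welcome clarification, but the underlying argument is the same as the paper's.
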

    \begin{proof}
        Inserting (\ref{eq:optimal_action}) into (\ref{eq:mode_insertion}) yields
        \begin{align}
            \frac{\partial \mathcal{J}}{\partial \lambda} &= \rho^\top \left(
                g(s) + h(s)\left(\Sigma(s) h(s)^\top \rho + \mu(s)\right) - g(s) - h(s)\mu(s)
            \right) \nonumber \\
            & = \rho^\top h(s) \Sigma(s) h(s)^\top \rho = \Vert h(s)^\top \rho \Vert_{\Sigma(s)} > 0.
        \end{align}
        From Pontryagin's Maximum principle, a solution that is a local optima of the objective function satisfies the
        following
        \begin{equation}
            \frac{\partial}{\partial a} \mathcal{H} = - \Sigma(s)^{-1} \left( a - \mu(s) \right) + h(s)^\top \rho = 0
        \end{equation}
        when $a = \Sigma(s) h(s)^\top \rho + \mu(s)$ or $\rho = 0$. Therefore, if the policy $\pi$ is a solution, then it
        must be that the adjoint $\rho=0$ and $\pi$ a solution to the optimal control problem (\ref{eq:cont_t_obj}). \qed
    \end{proof}
    Corollary~\ref{cor:Hamiltonian} tells us that the action defined in (\ref{eq:optimal_action}) generates the best action
    that will improve the performance of the robot given valid learned models. In addition, Corollary~\ref{cor:Hamiltonian}
    also states that if the policy is already a solution, then our approach for hybrid learning does not impede on the
    solution and returns the policy's action.

    Taking note of each proof, we can see that there is the strict requirement of continuity and differentiability of
    the learned models and policy. As this is not always possible, and often learned models have noisy derivatives, our
    goal is to try to reformulate (\ref{eq:cont_t_obj}) into an equivalent problem that can be solved without the need
    for the assumptions. One way is to formulate the problem in discrete time (as an expectation), which we will do in
    the following section.

    \vspace{-3mm}
    \subsection{Stochastic} We relax the continuity, differentiability, and continuous-time restrictions specified
    (\ref{eq:cont_t_obj}) by first restating the objective as an expectation:
    \begin{equation}
        \max \mathbb{E}_{v \sim \pi( \cdot \ | \ s)} \left[
            \mathcal{J}(v)
            \right]
    \end{equation}
    where $\mathcal{J}(v) = \sum_{t=0}^{T-1} r(s_t)$ subject to $s_{t+1} = f(s_t, v_t)$, and $v = [v_0, \ldots v_{H-1}]$
    is a sequence of $H$ randomly generated actions from the policy $\pi$. Rather than trying to find the best time
    $\tau$ and discrete duration $\lambda$, we approach the problem from an hybrid information theoretic view and
    instead find the best augmented actions to $\pi$ that improve the objective. This is accomplished by defining two
    distributions $\mathbb{P}$ and $\mathbb{Q}$ which are the uncontrolled system response distribution\footnote{We
    refer to uncontrolled as the unaugmented control response of the robotic agent subject to a stochastic policy $\pi$}
    and the open loop control distribution (augmented action distribution) described as probability density functions
    $p(v) = \prod_{t=0}^{T-1} \pi \left( v_t \ | \ s_t \right)$ and $q(v) = \prod_{t=0}^{T-1} \frac{1}{\sqrt{(2 \pi)^m |
    \Sigma(s_t) |}} \exp \left( -\frac{1}{2} (v_t - a_t)^\top \Sigma(s_t)^{-1} (v_t - a_t) \right)$ respectively. Here,
    $\pi( a \ | \ s) = \mathcal{N} (\mu(s), \Sigma(s) )$ and $q(v)$ use the same variance $\Sigma(s)$ as the policy. The
    uncontrolled distribution $\mathbb{P}$ represents the default predicted behavior of the robotic system under the
    learned policy $\pi$. Furthermore, the augmented open-loop control distribution $\mathbb{Q}$ is a way for us to
    define a probability of an augmented action, but more importantly, a free variable for which to optimize over given
    the learned models. Following the work in~\cite{williams2017information}, we use Jensen's inequality and importance
    sampling on the free-energy~\cite{theodorou2012relative} definition of the control system using the open loop
    augmented distribution $\mathbb{Q}$:
    \begin{equation} \label{eq:free_energy}
        \mathcal{F}(v) = -\lambda \log \left(
                    \mathbb{E}_{\mathbb{P}} \left[ \exp \left(\frac{1}{\lambda} \mathcal{J}(v) \right) \right]
                    \right)
                     \le - \lambda \mathbb{E}_{\mathbb{Q}} \left[
                            \log \left(  \frac{p(v)}{q(v)} \exp \left( \frac{1}{\lambda} \mathcal{J}(v) \right) \right)
                        \right]
    \end{equation}
    where $\lambda \in \mathbb{R}^+$ here is what is known as the temperature parameter (and not the time duration as used prior).
    Note that in (\ref{eq:free_energy}) if $\frac{p(v)}{q(v)} \propto 1/\exp\left( \frac{1}{\lambda}
    \mathcal{J}(v)\right)$ then the inequality becomes a constant. Further reducing the free-energy gives the following:
    \begin{equation}
        \mathcal{F}(v)  \le - \lambda \mathbb{E}_{\mathbb{Q}} \left[
                \log \left(  \frac{p(v)}{q(v)} \exp \left( \frac{1}{\lambda} \mathcal{J}(v) \right) \right)
            \right] \nonumber
         \le - \mathbb{E}_\mathbb{Q} \left[ \mathcal{J}(v)
            - \lambda \log \left( \frac{p(v)}{q(v)} \right) \right]
    \end{equation}
    which is the optimal control problem we desire to solve plus a bounding term which binds the augmented actions to
    the policy. In other words, the free-energy formulation can be used as an indirect approach to solve for the hybrid
    optimal control problem by making the bound a constant. Specifically, we can indirectly make $\frac{p(v)}{q(v)}
    \propto 1/\exp\left( \frac{1}{\lambda} \mathcal{J}(v)\right)$ which would make the free-energy bound reduce to a
    constant. Using this knowledge, we can define an optimal distribution $\mathbb{Q}^\star$ through its density
    function
    \begin{equation}
        q^\star(v) = \frac{1}{\eta} \exp \left( \frac{1}{\lambda} \mathcal{J} (v) \right) p(v), \ \
        \eta = \int_{\Omega} \exp \left( \frac{1}{\lambda} \mathcal{J} (v)\right) p(v)dv
    \end{equation}
    where $\Omega$ is the sample space.\footnote{The motivation is to use the optimal density function to gauge how well
    the policy $\pi$ performs.} Letting the ratio be defined as $\frac{p(v)}{q^\star(v)}$ gives us the proportionality
    that we seek to make the free-energy a constant. However, because we can not directly sample from
    $\mathbb{Q}^\star$, and we want to generate a separate set of actions $a_t$ defined in $q(v)$ that augments the
    policy given the learned models, our goal is to push $q(v)$ towards $q^\star(v)$. As done
    in~\cite{williams2017information, williams2016aggressive} this corresponds to the following optimization:
    \begin{equation}\label{eq:stoch_opt_prob}
        a^\star = \argmin_{a} D_\text{KL} \left(\mathbb{Q}^\star \ | \ \mathbb{Q} \right)
    \end{equation}
    which minimizes the Kullback-Leibler divergence of the optimal distribution $\mathbb{Q}^\star$ and the open-loop distribution
    $\mathbb{Q}$. In other words, we want to construct a separate distribution that augments the policy distribution
    $p(v)$ (based on the optimal density function) such that the objective is improved.

    \begin{theorem}
        The recursive, sample-based, solution to (\ref{eq:stoch_opt_prob}) is
        \begin{align}
            a^\star_t = a_t + \sum_k \omega(v^k_t) \delta a^k_t
             \ \ \text{where} \ \  \omega(v)
            = \frac{\exp \left( \frac{1}{\lambda} \mathcal{J} (v) \right) p(v)}{\sum_n \exp \left(  \frac{1}{\lambda} \mathcal{J} (v)  \right) p(v)}
        \end{align}
        where $k$ denotes the sample index and $v_t = a_t + \delta a_t$.
    \end{theorem}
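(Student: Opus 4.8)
The plan is to convert the KL minimization in (\ref{eq:stoch_opt_prob}) into a moment-matching condition on $\mathbb{Q}$ and then make that condition computable by importance-weighted rollouts. First I would expand $D_\text{KL}(\mathbb{Q}^\star \,\|\, \mathbb{Q}) = \mathbb{E}_{\mathbb{Q}^\star}[\log q^\star(v)] - \mathbb{E}_{\mathbb{Q}^\star}[\log q(v)]$ and observe that the first term does not involve the free variable $a = [a_0, \dots, a_{T-1}]$, so (\ref{eq:stoch_opt_prob}) is equivalent to $\max_a \mathbb{E}_{\mathbb{Q}^\star}[\log q(v)]$. Plugging in the Gaussian form of $q(v)$ gives $\log q(v) = -\tfrac12 \sum_t (v_t - a_t)^\top \Sigma(s_t)^{-1} (v_t - a_t) + \text{const}$, a strictly concave function of each $a_t$ (the Hessian is $-\Sigma(s_t)^{-1}$, negative definite), so its maximizer is the unique stationary point. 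Differentiating under the expectation and equating to zero gives $\mathbb{E}_{\mathbb{Q}^\star}[\Sigma(s_t)^{-1}(v_t - a_t)] = 0$; treating $\Sigma(s_t)$ as fixed along the nominal rollout (equivalently, folding it into the weighting) this collapses to $a_t^\star = \mathbb{E}_{\mathbb{Q}^\star}[v_t]$, i.e. the optimal augmented action sequence is exactly the mean of $\mathbb{Q}^\star$.

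Second, I would make the mean $\mathbb{E}_{\mathbb{Q}^\star}[v_t]$ tractable. Using the closed form $q^\star(v) = \tfrac1\eta \exp(\tfrac1\lambda \mathcal{J}(v))\, p(v)$, write $\mathbb{E}_{\mathbb{Q}^\star}[v_t] = \int v_t\, q^\star(v)\, dv$ and approximate it by a self-normalized Monte Carlo estimate over $K$ sampled trajectories $v^k$ generated as in Alg.~\ref{alg:hlt_stoch}; the self-normalization cancels the intractable partition function $\eta$, leaving $a_t^\star \approx \sum_k \omega(v^k_t)\, v^k_t$ with weights $\omega(v) \propto \exp(\tfrac1\lambda \mathcal{J}(v))\, p(v)$ normalized across the $K$ samples, which is exactly the stated $\omega$. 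Finally, writing each sampled action as a perturbation of the running estimate, $v^k_t = a_t + \delta a^k_t$, and using $\sum_k \omega(v^k_t) = 1$, I would rearrange $\sum_k \omega(v^k_t)\, v^k_t = a_t + \sum_k \omega(v^k_t)\, \delta a^k_t$, which is the claimed recursive update; iterating it --- re-rolling trajectories and recomputing the weights about the updated mean --- gives the recursive, sample-based solution.

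The step I expect to be the main obstacle is the change of measure in the second part: one has to be careful about which distribution the trajectories $v^k$ are actually drawn from, since that choice fixes the residual density ratio in the importance weight and hence exactly which factors survive in $\omega$ after normalization (sampling from $\mathbb{P}$ versus the current iterate $\mathbb{Q}$ leaves different leftover terms, and the factor $p(v)$ in $\omega$ is precisely this leftover). Making that bookkeeping collapse cleanly to $\omega(v)\propto \exp(\tfrac1\lambda\mathcal{J}(v))p(v)$, and being explicit that the finite-$K$ formula is a consistent estimator of the exact optimum rather than an identity, is where the care is needed; the concavity/stationarity computation and the rearrangement into the $a_t + \sum_k \omega\,\delta a$ form are otherwise routine. \qed
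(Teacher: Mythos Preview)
Your proposal is correct and follows essentially the same route as the paper's proof (which tracks the MPPI derivation of \cite{williams2017information}): reduce $D_{\mathrm{KL}}(\mathbb{Q}^\star\|\mathbb{Q})$ to cross-entropy maximization in $a$, use the Gaussian form of $q$ to get the first-order condition $a_t^\star=\mathbb{E}_{\mathbb{Q}^\star}[v_t]$, then realize that expectation via self-normalized importance sampling and rewrite it as the incremental update $a_t+\sum_k\omega\,\delta a_t^k$. Your flagged obstacle---which proposal distribution the $v^k$ are drawn from and how that fixes the surviving density ratio in $\omega$---is exactly the bookkeeping the paper carries out in its appendix, so you have identified the right place to be careful.
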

    \begin{proof}
        See Appendix~\ref{proof:hybrid_stochastic} for proof \qed
    \end{proof}

    The idea behind the stochastic variation is to generate samples from the stochastic policy and evaluate its utility
    based on the current belief of the dynamics and the reward function. Since samples directly depend on the likelihood
    of the policy, any actions that steer too far from the policy will be penalized depending on the confidence of the
    policy. The inverse being when the policy has low confidence (high variance) the sample span increases and more
    model-based information is utilizes. Note that we do not have to worry about continuity and differentiability
    conditions on the learned models and can utilize arbitrarily complex models for use of this algorithm. We
    outline the stochastic algorithm for hybrid learning Alg.~\ref{alg:hlt_stoch}.

\vspace{-3mm}
\section{Results} \label{sec:results}
\vspace{-2mm}

    In this section, we present two implementations of our approach for hybrid learning. The first uses experience-based
    methods through robot interaction with the environment. The goal is to show that our method can improve the overall
    performance and sample-efficiency by utilizing learned predictive models and experience-based policies generated
    from off-policy reinforcement learning. In addition, we aim to show that through our approach, both the model-based
    and experience-based methods are improved through the hybrid synthesis. The second implementation illustrates our
    algorithm with imitation learning where expert demonstrations are used to generate the experience-based policy
    (through behavior cloning) and have the learned predictive models adapt to the uncertainty in the policy. All
    implementation details are provided in Appendix~\ref{app:imp}.

    \begin{figure}[h!]
        % \dummyfig{What example is this }
        \centering
        \includegraphics[width=0.95\linewidth]{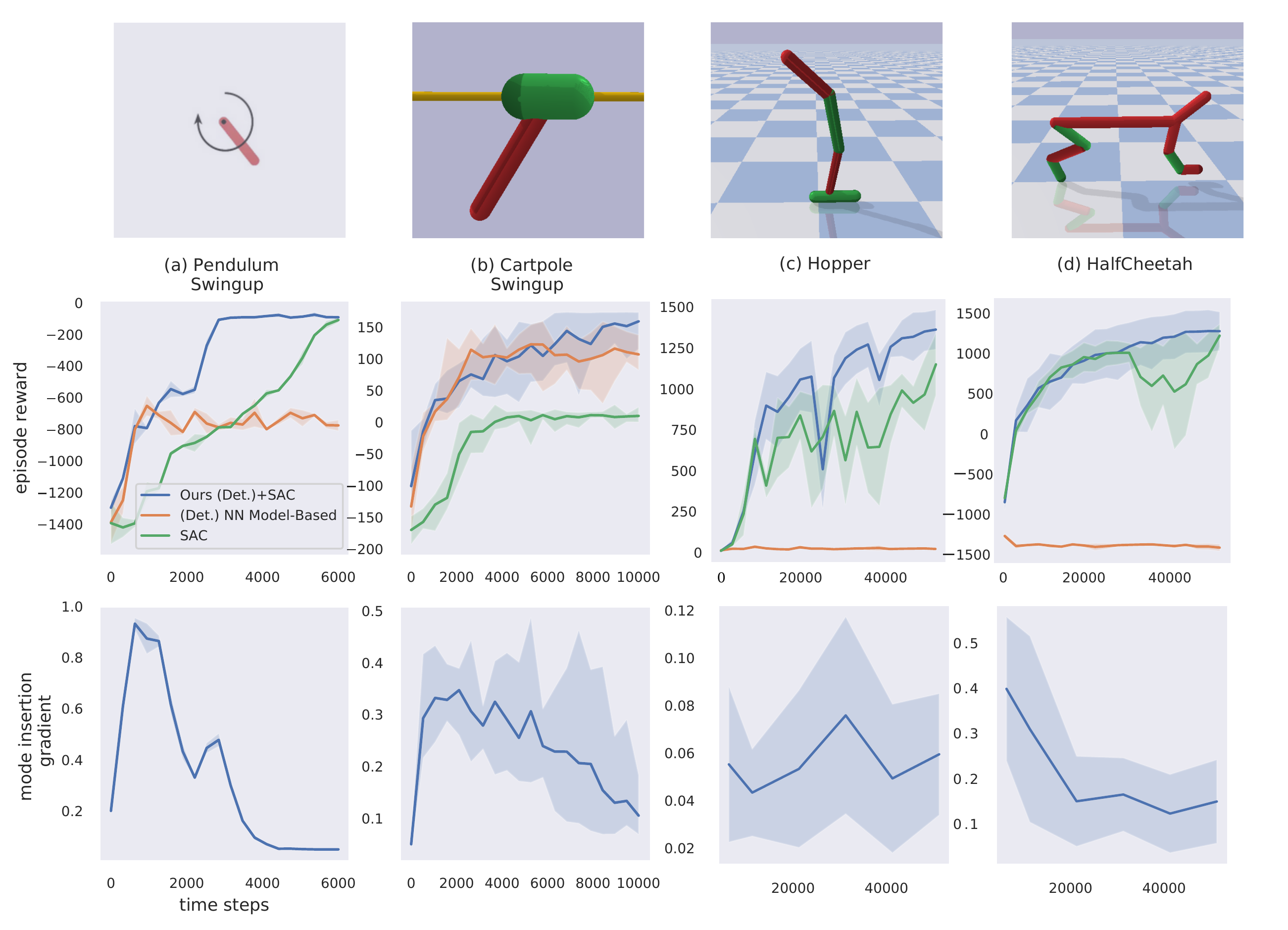}
        \vspace{-4mm}
        \caption{
            \footnotesize Performance curves of our proposed deterministic hybrid learning algorithm on multiple
            environments (averaged over 5 random seeds). All methods use the same structured learning models.
            Our method is shown to improve the model-based benchmark results (due to the use of experience-based methods)
            while maintaining significant improvements on the number of interactions necessary with the environment to
            obtain those results. The mode insertion gradient is also shown for each example which illustrates the
            model-policy agreement over time and the improvement over time.
        }
        \label{fig:deter_bm_results}
    \end{figure}
    % \vspace{-10mm}

    \begin{figure}[h!]
        \vspace{-2mm}
        % \dummyfig{What example is this }
        \centering
        \includegraphics[width=0.95\linewidth]{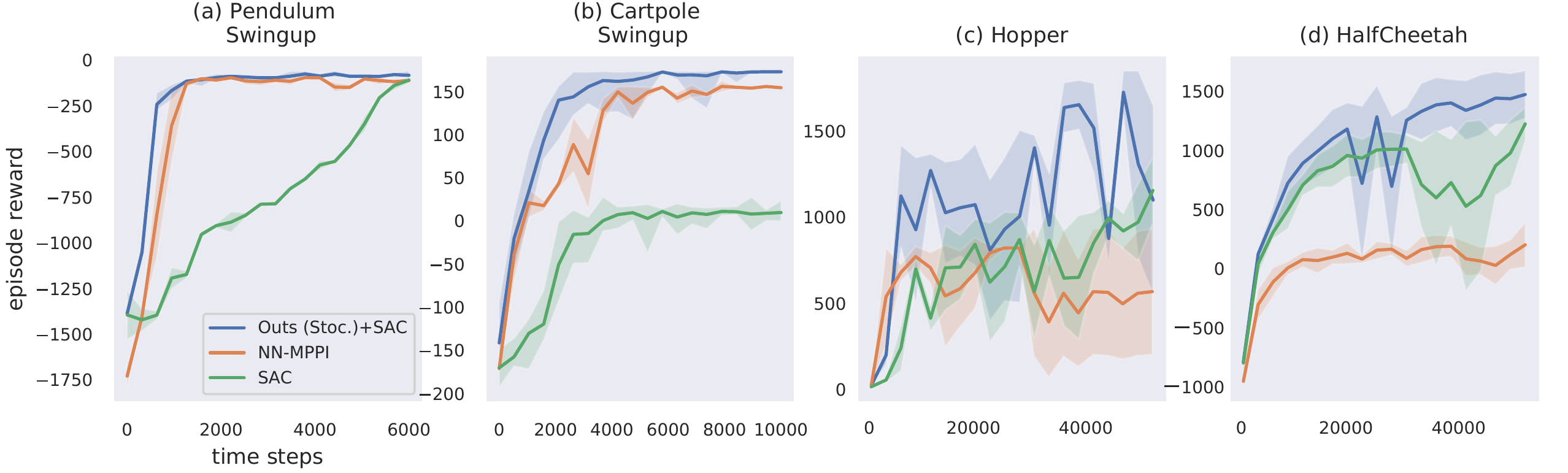}
        \vspace{-4mm}
        \caption{
            \footnotesize Performance curves of our proposed stochastic hybrid learning algorithm on multiple
            environments (averaged over 5 random seeds). As shown before, our approach improves both the sample-efficiency but
            also the highest expected reward. In addition, the stochastic variation of the hybrid learning algorithm
            generates smoother learning curves as a result of not requiring derivatives of learned models.
        }
        \label{fig:stoch_bm_results}
        \vspace{-6mm}
    \end{figure}

    \vspace{-2mm}
    \paragraph{Learning from Experience:} We evaluate our approach in the deterministic and stochastic settings using
    experience-based learning methods and compare against the standard in model-based and experience-based learning. In
    addition, we illustrate the ability to evaluate our method's performance of the learned models through the mode
    insertion gradient. Experimental results validate hybrid learning for real robot tasks. For each example, we use
    Soft Actor Critic (SAC)~\cite{haarnoja2018soft} as our experience-based method and a neural-network based
    implementation of model-predictive path integral for reinforcement learning~\cite{williams2017information} as a
    benchmark standard method. The parameters for SAC are held as default across all experiments to remove any impact of
    hyperparameter tuning.

    Hybrid learning is tested in four simulated environments: pendulum swingup, cartpole swingup, the hopper
    environment, and the half-cheetah environment (Fig.~\ref{fig:deter_bm_results}) using the Pybullet
    simulator~\cite{coumans2016pybullet}. In addition, we compare against state-of-the-art approaches for model-based
    and experience-based learning. We first illustrate the results using the deterministic variation of hybrid learning
    in Fig.~\ref{fig:deter_bm_results} (compared against SAC and a deterministic model-predictive
    controller~\cite{ansari2016sequential}). Our approach uses the confidence bounds generated by the stochastic policy
    to infer when best to rely on the policy or predictive models. As a result, hybrid learning allows for performance
    comparable to experience-based methods with the sample-efficiency of model-based learning approaches. Furthermore,
    the hybrid control approach allows us to generate a measure for calculating the agreement between the policy and the
    learned models (bottom plots in Fig.~\ref{fig:deter_bm_results}), as well as when (and how much) the models were
    assisting the policy. The commonality between each example is the eventual reduction in the assistance of the
    learned model and policy. This allows us to better monitor the learning process and  dictate how well the policy is
    performing compared to understanding the underlying task.

    \begin{figure}
        \vspace{-5mm}
        % \dummyfig{What example is this }
        \centering
        \includegraphics[width=\linewidth]{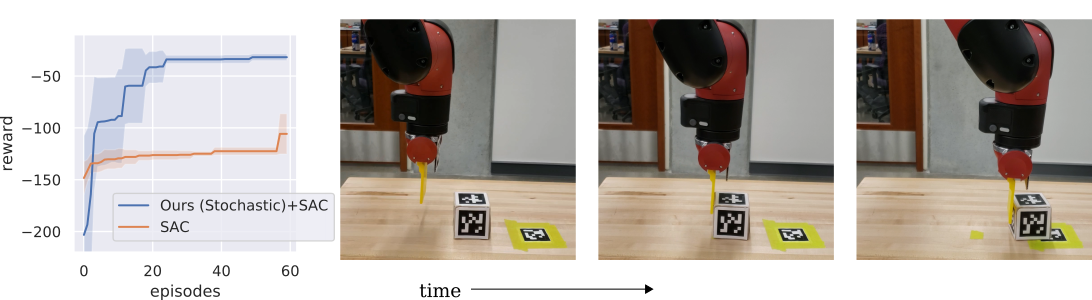}
        \vspace{-5mm}
        \caption{
            \footnotesize Hybrid learning with experience-based policy results on the Sawyer robot (averaged over 5 trials). The task is to push
            a block to a designated target through environment interactions (see time-series results above). Our method is
            able achieve the task within 3 minutes (each episode takes 10 seconds) through effectively using both
            predictive models and experience-based methods. The same amount of interaction with SAC was unable to
            successfully push the block to the target.
        }
        \label{fig:sawyer_hlt}
        \vspace{-5mm}
    \end{figure}

    We next evaluate the stochastic variation of hybrid learning, where we compare against a
    stochastic neural-network model-based controller~\cite{williams2017information} and SAC. As shown in
    Fig.~\ref{fig:stoch_bm_results}, the stochastic variation still maintains the improved performance and
    sample-efficiency across all examples while also having smoother learning curves. This is a direct result of the
    derivation of the algorithm where continuity and differentiability of the learned models are not necessary. In
    addition, exploration is naturally encoded into the algorithm through the policy, which results in more stable
    learning when there is uncertainty in the task. In contrast, the deterministic approach required added exploration noise
    to induce exploring other regions of state-space. A similar result is found when comparing the model-based performance
    of the deterministic and stochastic approaches, where the deterministic variation suffers from modeling discontinuous
    dynamics.

    We can analyze the individual learned model and policy in Fig.~\ref{fig:stoch_bm_results} obtained from hybrid
    learning. Specifically, we look at the cartpole swingup task for the stochastic variation of hybrid learning in
    Fig.~\ref{fig:component_analysis} and compare against benchmark model-based learning
    (NN-MPPI~\cite{williams2017information}) and experience-based learning (SAC~\cite{haarnoja2018soft}) approaches.
    Hybrid learning is shown to improve the learning capabilities of both the learned predictive model and the policy
    through the hybrid control approach. In other words, the policy is ``filtered'' through the learned model and
    augmented, allowing the robotic system to be guided by both the prediction and experience. Thus, the predictive
    model and the policy are benefited, ultimately performing better as a standalone approach using hybrid learning.

    \begin{SCfigure}
        \centering
        \includegraphics[width=0.45\linewidth]{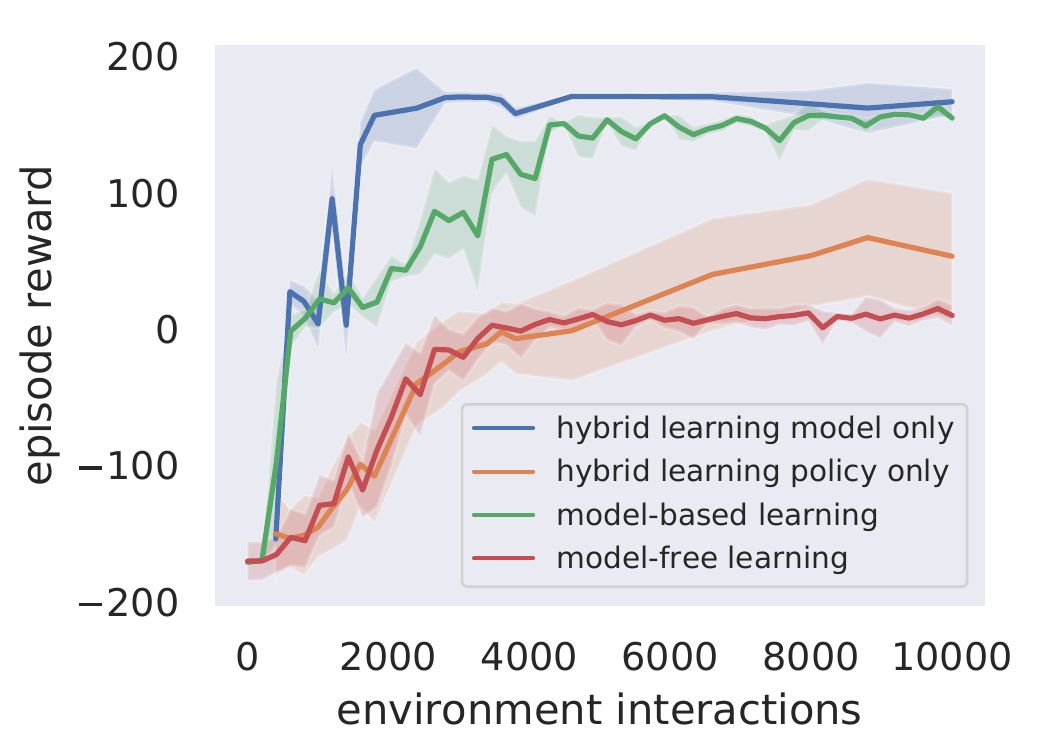}
        \vspace{3mm}
        \caption{
            \footnotesize Performance curves for the individual learned model and policy on the cartpole swingup
            environment during hybrid learning (averaged over 10 trials). Our method is shown to improve the capabilities
            of the model-based and experience-based components through mutual guidance defined by hybrid control theory.
            Reference model-based learning (NN-MPPI) and experience-based learning (SAC) approaches are shown for
            comparison.
        }
        \label{fig:component_analysis}
        % \vspace{-5mm}
    \end{SCfigure}

    Next, we apply hybrid learning on real robot experiments to illustrate the sample-efficiency and performance our
    approach can obtain (see Fig.~\ref{fig:sawyer_hlt} for task illustration). \footnote{The same default parameters for
    SAC are used tor this experiment.} We use a Sawyer robot whose goal is to push a block on a table to a specified
    marker. The position of the marker and the block are known to the robot. The robot is rewarded for pushing the block
    to the marker. What makes this task difficult is the contact between the arm and the block that the robot needs to
    discover in order to complete the pushing task. Shown in Fig.~\ref{fig:sawyer_hlt} our hybrid learning approach is
    able to learn the task within 20 episodes (total time is 3 minutes, 10 seconds for each episode). Since our method
    naturally relies on the predictive models when the policy is uncertain, the robot is able to plan through the
    contact to achieve the task whereas SAC takes significantly longer to discover the pushing dynamics.
    As a result, we are able to achieve the task with minimal environment interaction.

    \begin{figure}
        % \dummyfig{What example is this }
        \centering
        \includegraphics[width=\linewidth]{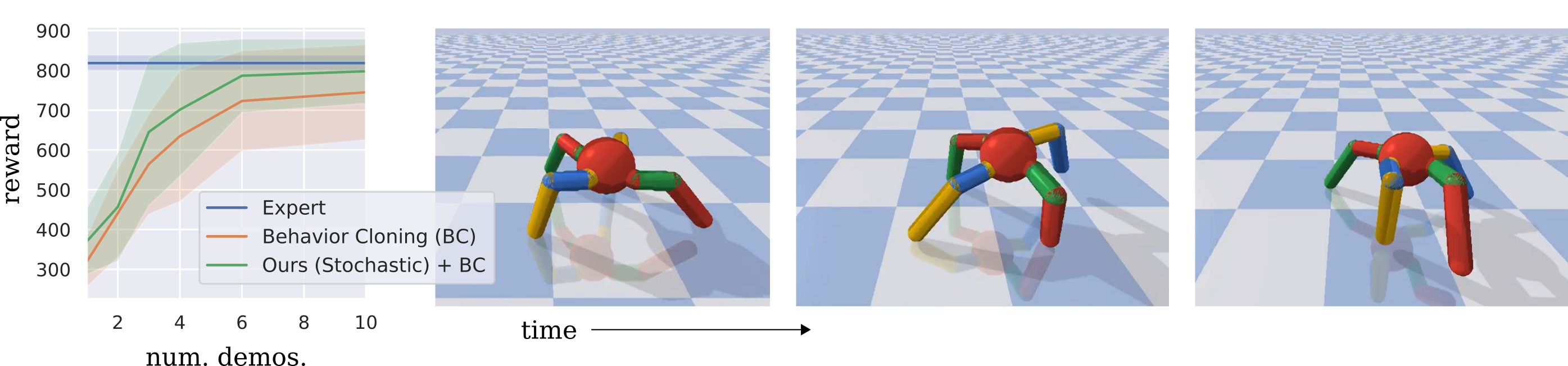}
        \vspace{-4mm}
        \caption{
            \footnotesize Results for hybrid stochastic control with behavior cloned policies (averaged over 10 trials)
            using the Ant Pybullet environment (shown in a time-lapsed running sequence). Expert demonstrations (actions
            executed by an expert policy on the ant robot) are used as experience to boot-strap a learned stochastic
            policy (behavior cloning) in addition to predictive models which encode the dynamics and the underlying task
            of the ant. Our method is able to adapt the expert experience to the predictive models, improving the
            performance of behavior cloning and performing as well as the expert.
        } \vspace{-2mm}
        \label{fig:ant_bc}
    \end{figure}
    \vspace{-8mm}

    \paragraph{Learning from Examples:} We extend our method to use expert demonstrations as experience (also known as
    imitation learning~\cite{argall2009survey, ross2010efficient}). Imitation learning focuses on using expert
    demonstrations to either mimic a task or use as initialization for learning complex data-intensive tasks. We use
    imitation learning, specifically behavior cloning, as an initialization for how a robot should accomplish a task.
    Hybrid learning as described in Section~\ref{sec:hybrid_learning_theory} is then used as a method to embed
    model-based information to compensate for the uncertainty in the learned policy, improving the overall performance
    through planning. The specific algorithmic implementation of hybrid imitation learning is provided in
    Appendix~\ref{app:imp}.

    \begin{figure}[h!]
        \centering
        \includegraphics[width=\linewidth]{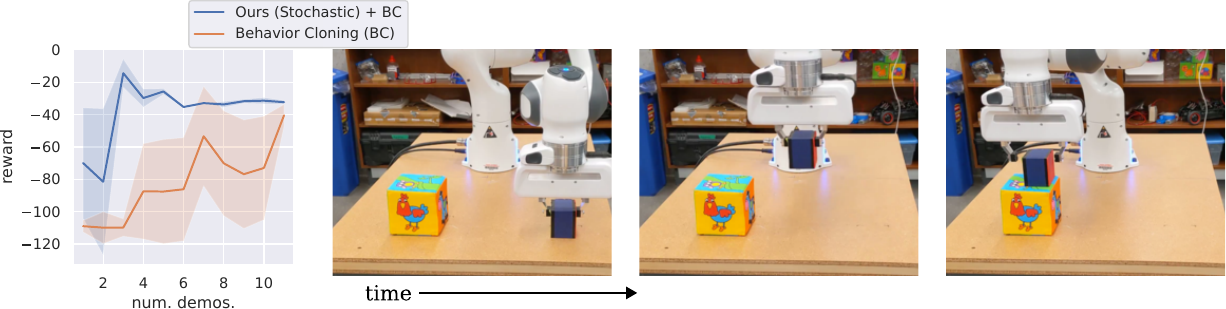}
        \vspace{-5mm}
        \caption{
            \footnotesize
            Hybrid learning with behavior cloning results on the Franka panda robot (averaged over 5 trials).
            The task is to stack a block on top of another using expert demonstrations. Our method
            is able to learn the block stacking task within three expert demonstrations and
            provides solutions that are more repeatable than with behavior cloning.
        }
        \label{fig:franka_bc}
        \vspace{-6mm}
    \end{figure}

    We test hybrid imitation on the Pybullet Ant environment. The goal is for the four legged ant to run as far as it
    can to the right (from the viewer's perspective) within the allotted time. At each iteration, we provide the agent
    with an expert demonstration generated from a PPO~\cite{schulman2017proximal} solution. Each demonstration is used
    to construct a predictive model as well as a policy (through behavior cloning). The stochastic hybrid learning
    approach is used to plan and test the robot's performance in the environment. Environment experience is then used to
    update the predictive models while the expert demonstrations are solely used to update the policy. In
    Fig.~\ref{fig:ant_bc}, we compare hybrid learning against behavior cloning. Our method is able to achieve the task
    at the level of the expert within 6 (200 step) demonstrations, where the behavior cloned policy is unable to achieve
    the expert performance. Interestingly, the ant environment is less susceptible to the covariant shift problem (where
    the state distribution generated by the expert policy does not match the distribution of states generated by the
    imitated policy \cite{ross2010efficient}), which is common in behavior cloning. This suggests that the ant
    experiences a significantly large distribution of states during the expert demonstration. However, the resulting
    performance for the behavior cloning is worse than that of the expert. Our approach is able to achieve similar
    performance as behavior cloning with roughly 2 fewer demonstrations and performs just as well as the expert
    demonstrations.

    We test our approach on a robot experiment with the Franka Panda robot (which is more likely to have the covariant
    shift problem). The goal for the robot is to learn how to stack a block on top of another block using demonstrations
    (see Fig.~\ref{fig:franka_bc}). As with the ant simulated example in Fig.~\ref{fig:ant_bc}, a demonstration is
    provided at each attempt at the task and is used to update the learned models. Experience obtained in the
    environment is solely used to update the predictive models. We use a total of ten precollected demonstrations of the
    block stacking example (given one at a time to the behavior cloning algorithm before testing). At each testing time,
    the robot arm is initialized at the same spot over the initial block. Since the demonstrations vary around the arm's
    initial position, any state drift is a result of the generated imitated actions and will result in the covariant
    shift problem leading to poor performance. As shown in Fig.~\ref{fig:franka_bc}, our approach is capable of learning
    the task in as little as two demonstrations where behavior cloning suffers from poor performance. Since our approach
    synthesizes actions when the policy is uncertain, the robot is able to interpolate between regions where the expert
    demonstration was lacking, enabling the robot to achieve the task.

\vspace{-3mm}
\section{Conclusion}\label{sec:conclusion}
\vspace{-1mm}

    We present hybrid learning as a method for formally combining model-based learning with experience-based policy
    learning based on hybrid control theory. Our approach derives the best action a robotic agent can take given the
    learned models. The proposed method is then shown to improve both the sample-efficiency of the learning process as
    well as the overall performance of both the model and policy combined and individually. Last, we tested our approach
    in various simulated and real-world environments using a variety of learning conditions and show that our method
    improves both the sample-efficiency and the resulting performance of learning motor skills.
\vspace{-2mm}
{\footnotesize
\bibliography{references}
}

\mainmatter

\title{Appendix: Hybrid Control for\\ Learning Motor Skills}
\titlerunning{Appendix: Hybrid Control for Learning Motor Skills}  % abbreviated title (for running head)

\author{Ian Abraham\inst{1} \and Alexander Broad\inst{2} \and Allison Pinosky\inst{1} \and \\ Brenna Argall\inst{1,2} \and Todd D. Murphey\inst{1}}
\authorrunning{Abraham et al.} % abbreviated author list (for running head)
%
%%%% list of authors for the TOC (use if author list has to be modified)
\tocauthor{Ian Abraham, Alexander Broad,
Allison Pinosky, Brenna Argall, and Todd D. Murphey}

\institute{
    Department of Mechanical Engineering \\
    \and
    Department of Electrical Engineering \\
    and Computer Science \\
    Northwestern University, Evanston, IL 60208, USA \\
    \email{i-abr@u.northwestern.edu, alexsbroad@gmail.com,
    apinosky@u.northwestern.edu, t-murphey@northwestern.edu, brenna.argall@northwestern.edu}
}

\maketitle

\section{Proofs} \label{app:proofs}

    \begin{lemma}
        Assume that $f$, $r$, and $\mu$ are differentiable and continuous in time. The sensitivity of
        (\ref{eq:cont_t_obj}) (also known as the mode insertion gradient) with respect to the duration time $\lambda$
        from switching between $\mu(s)$ to $a^\star$ and any time $\tau \in [0, t_H]$ is defined as
        \begin{equation}
           \frac{\partial}{\partial \lambda} \mathcal{J}(\tau) = \rho(\tau)^\top (f_2 - f_1)|_\tau
        \end{equation}
        where $f_1 = f(s(t), \mu(s(t)))$ and $f_2 = f(s(t), a^\star(t))$, and $\rho(t) \in \sspace$ is the adjoint
        variable which is the the solution to the the differential equation
        \begin{equation}
            \dot{\rho}(t) = -\dldx - \left( \dfdx + \dmudx^\top \dfdu \right)^\top \rho(t)
        \end{equation}
        with terminal condition $\rho(t_H) = \mathbf{0}$.
    \end{lemma}
    \begin{proof}\label{proof:mode_insert}
        First define the trajectory
        \begin{equation}\label{eq:hybrid_trajectory}
            s(t_H) = s(0) + \int_0^\tau f(s(t), \mu(s(t))) dt + \int_\tau^{\tau + \lambda} f(s(t), a^\star(t)) dt \
            + \int_{\tau + \lambda}^{t_H} f(s(t), \mu(s(t)))dt
        \end{equation}
        generated from $a(t) =
                    \begin{cases}
                        a^\star(t),         & \text{if } t \in [ \tau, \tau + \lambda ] \\
                        a_\text{def}(t)     & \text{otherwise}
                    \end{cases}$ where $a_\text{def}(t) = \mu(s(t))$.
        Next, let us take the derivative of (\ref{eq:cont_t_obj}) with respect to the time duration $\lambda$ so that
        we have the following expression:
        \begin{equation}\label{eq:mi_1}
            \frac{\partial}{\partial \lambda} \mathcal{J} = \int_{\tau + \lambda}^{t_H}
                \dldx^\top\frac{\partial s}{ \partial \lambda} dt.
        \end{equation}
        Using (\ref{eq:hybrid_trajectory}), we can define $\frac{\partial s}{\partial \lambda}$ as
        \begin{equation}\label{eq:recursive_dsdlam}
            \frac{\partial s(t)}{\partial \lambda} = f_2 - f_1 + \int_{\tau + \lambda}^t
            \left( \dfdx + \dmudx^\top\dfdu \right)^\top \frac{\partial s (\sigma)}{\partial \lambda} d\sigma
        \end{equation}
        where $\sigma$ is a place holder for time under the integrand, and $f_1 = f(s(t), \mu(s(t)))$ and $f_2 = f(s(t),
        a^\star(t))$ are remaining boundary terms from applying Leibniz's rule.

        Noting that (\ref{eq:recursive_dsdlam}) is a linear convolution (due to the repeating $\frac{\partial s}{\partial \lambda}$ terms)
        with initial condition $\frac{\partial s}{\partial \lambda}(\tau) = f_2 - f_1$, we can rewrite (\ref{eq:recursive_dsdlam}) using
        a state-transition matrix
        \begin{equation}
            \Phi(t, \tau)=\exp \left( \left( \dfdx + \dmudx^\top\dfdu \right)^\top (t - \tau) \right)
        \end{equation}
        with initial condition $f_2-f_1$ as
        \begin{equation}\label{eq:convolution}
            \frac{\partial s(t)}{\partial \lambda} = \Phi(t, \tau)(f_2 - f_1)
        \end{equation}
        Using (\ref{eq:convolution}) in (\ref{eq:mi_1}) and pulling out the term $f_2-f_1$ from under the integrand, we can rewrite
        (\ref{eq:mi_1}) as the following:
        \begin{equation}
            \frac{\partial}{\partial \lambda} \mathcal{J}(\tau) = \lim_{\lambda \to 0} \int_{\tau + \lambda}^{t_H}
                \frac{\partial r}{\partial s}^\top \Phi(t, \tau) dt \left(f_2 - f_1\right).
        \end{equation}
        Taking the limit as $\lambda \to 0$ gives the instantaneous sensitivity from switching from $\mu \to a^\star$ at any time $\tau$.
        Let us define this term as the adjoint variable
        \begin{equation}
            \rho(\tau)^\top = \int_\tau^{t_H} \frac{\partial r}{\partial s}^\top \Phi(t, \tau) dt
        \end{equation}
        which give us the mode insertion gradient
        \begin{equation}
            \frac{\partial}{\partial \lambda} \mathcal{J}(\tau) = \rho(\tau)^\top \left(f_2 - f_1 \right)
        \end{equation}
        where the adjoint can be rewritten as the following differential equation
        \begin{equation}
            \dot{\rho}(t) = -\dldx - \left( \dfdx + \dmudx^\top \dfdu\right) \rho(t)
        \end{equation}
        with terminal condition $\rho(t_H) = \mathbf{0}$. \qed
    \end{proof}

    \begin{theorem}
        The recursive, sample-based, solution to
        \begin{equation}
            a^\star = \argmin_a D_\text{KL}\left( \mathbb{Q}^\star \Vert \mathbb{Q}\right)
        \end{equation}
        found in Eq.~\ref{eq:stoch_opt_prob} is
        \begin{align}
            a^\star_t = a_t + \sum_k \omega(v_t) \delta a_t
             \ \ \text{where} \ \  \omega(v)
            = \frac{\exp \left( \frac{1}{\lambda} \mathcal{J} (v) \right) p(v)}{\sum_n \exp \left(  \frac{1}{\lambda} \mathcal{J} (v)  \right) p(v)}
        \end{align}
        where $k$ denotes the sample index and $v_t = a_t + \delta a_t$.
    \end{theorem}
    \begin{proof}\label{proof:hybrid_stochastic}
        Expanding the objective in (\ref{eq:stoch_opt_prob}), we can show that
        \begin{align}\label{eq:astar_obj}
            a^\star & = \argmin_a \mathbb{E}_{\mathbb{Q}^\star} \left[
                        \log \left( \frac{q^\star(v)}{q(v)} \right)\right] \nonumber\\
                    & =  \argmin_a \int_\Omega q^\star(v)
                                \log \left( \frac{q^\star(v)}{p(v)} \frac{p(v)}{q(v)} \right) dv \nonumber\\
                    & = \argmin_a  \int_\Omega q^\star(v) \log \left(\frac{q^\star(v)}{p(v)} \right)
                            - \int_\Omega q^\star(v) \log \left( \frac{q(v)}{p(v)} \right) dv \nonumber\\
                    & = \argmax_a \int_\Omega q^\star(v) \log\left( \frac{q(v)}{p(v)} \right) dv .
        \end{align}
        Defining the policy $\pi(v_t \ | \ s_t) = \mathcal{N} \left( \mu(s_t), \Sigma(s_t) \right)$ as normally distributed,
        we can show that
        \begin{align*}
            \frac{q(v)}{p(v)} & \propto \exp\left(\sum_t -\frac{1}{2} (v_t - a_t)^\top \Sigma^{-1} (v_t - a_t)
            + \frac{1}{2}(v_t - \mu(s_t))^\top \Sigma^{-1} (v_t - \mu(s_t)) \right) \\
            & = \exp\left(\sum_t -\frac{1}{2} (v_t - a_t)^\top \Sigma^{-1} (v_t - a_t)
            + \frac{1}{2}(v_t - \mu(s_t))^\top \Sigma^{-1} (v_t - \mu(s_t)) \right) \\
            & = \exp\left(\sum_t -\frac{1}{2} a_t^\top \Sigma^{-1} a_t + a_t^\top \Sigma^{-1} v_t
                    + \mu(s_t)^\top \Sigma^{-1} (\mu(s_t) - 2 v_t) \right)
        \end{align*}
        where $\Sigma = \Sigma(s)$ is used as short-hand notation.
        Plugging this expression into Eq.~(\ref{eq:astar_obj}) gives
        \begin{equation*}
            a^\star = \argmax_a \sum_t -\frac{1}{2} a_t^\top \Sigma^{-1} a_t
                    + a_t^\top \int_\Omega q^\star(v) \Sigma^{-1}v_t dv
                    + \mu(s_t)^\top\int_\omega q^\star(v)(\mu(s_t) - 2v_t) dv.
        \end{equation*}
        which we can solve for $a_t$ at each time by setting the derivative with respect
        to $a_t$ to zero to give the optimal solution
        \begin{equation}\label{eq:opt_sol1}
            a_t^\star = \int_\Omega q^\star(v) v_t dv.
        \end{equation}
        Note that the expression $q^\star(v) \propto \exp\left( \frac{1}{\lambda} \mathcal{J} (v)\right) p(v)$ which
        allows us to rewrite (\ref{eq:opt_sol1}) in the following way:
        \begin{align}
            a_t^\star &= \int_\Omega q^\star(v) v_t dv
                = \int_\Omega \frac{1}{\eta} \exp \left(\frac{1}{\lambda}\mathcal{J}(v) \right)p(v) v_t dv \\
                & = \mathbb{E}_\mathbb{P} \left[ \frac{1}{\eta} \exp \left(\frac{1}{\lambda}\mathcal{J}(v) \right) v_t \right].
        \end{align}
        Using the change of variable $v_t = a_t + \delta a_t$, we get the recursive, sample-based solution
        \begin{equation}
            a^\star_t = a_t + \sum_k \omega(v^k_t) \delta a_t^k
        \end{equation}
        where
        \begin{equation}
            \omega(v) = \frac{\exp \left( \frac{1}{\lambda} \mathcal{J} (v) \right) p(v)}{\sum_n \exp \left(  \frac{1}{\lambda} \mathcal{J} (v)  \right) p(v)}.
        \end{equation}
        \qed
    \end{proof}

\section{Implementation Details} \label{app:imp}

    Here, we present implementation details for each of the examples provided in the main paper as well as additional
    algorithmic details presented and mentioned throughout the paper. Any parameter not explicitly mentioned as
    deterministic or stochastic variations of hybrid learning are equivalent unless otherwise specified. All
    simulated examples have reward functions specified as the default rewards found in the Pybullet
    environments~\cite{coumans2016pybullet} unless otherwise specified. Table~\ref{tab:param} provides
    a lists of all hyperparameters used for each environment tested.

    \paragraph{Models:} For each simulated example using the experience-based method, we use the same model
    representation of the dynamics as $s_{t+1} = s_t + f(s_t ,a_t)$ where $f(s_t, a_t) = \mathbf{W}_2\sin(\mathbf{W}_1
    [s_t,a_t] + \mathbf{b}_1) + \mathbf{b}_2$, and $\mathbf{W}_1 \in \mathbb{R}^{200 \times (n+m)}$, $\mathbf{W}_2 \in
    \mathbb{R}^{n \times 200}$, $\mathbf{b}_1 \in \mathbb{R}^{200}$, $\mathbf{b}_2 \in \mathbb{R}^n$ are learned
    parameters. For locomotion tasks we use the rectifying linear unit (RELU) nonlinearity. The reward function is
    modeled as a two layer network with $200$ hidden nodes and rectifying linear unit activation function. Both the
    reward function and dynamics model are optimized using Adam~\cite{kingma2014adam} with a learning rate of $0.003$.
    The model is regularized using the negative log-loss of a normal distribution where the variance,
    $\Sigma_\text{model} \in \mathbb{R}^{n \times n}$, is a hyperparameter that is simultaneously learned based on
    model-free learning. The predicted reward utility is improved by the error between the predicted target and target
    reward equal to $\mathcal{L} = \Vert r_t + 0.95 \ r(s_{t+1}, a_{t+1}) - r(s_t, a_t) \Vert ^2$ (similar to the
    temporal-difference learning~\cite{boyan1999least, precup2001off}). This loss encourages learning the value of the
    state and action that was taken for environments that have rewards that do not strictly depend on the current state
    (i.e., the reward functions used in Pybullet locomotion examples). A batch size of $128$ samples are taken from the
    data buffer $\mathcal{D}$ for training.

    \setlength\tabcolsep{6pt} % default value: 6pt
    \begin{table}[]
    \centering
    \begin{tabular}{ccccccc}
    \toprule
    Environment      & $H$ & $T$ & $K$ & $\lambda$ & policy dim & nonlinearity \\ \midrule
    Pendulum Swingup & 5           & 200              & 20  & 0.1   &  $128$   & $\sin(x)$ \\ \hline
    Cartpole Swingup & 5           & 200              & 20  & 0.1   &  $128$   & $\sin(x)$ \\ \hline
    Hopper           & 5           & 1000             & 20  & 0.1   &  $128$   & $\text{relu}(x)$ \\ \hline
    Half-Cheetah     & 10          & 1000             & 20  & 0.2   &  $128$   & $\text{relu}(x)$ \\ \hline
    Sawyer           & 10          & 100              & 10  & 0.01   & $128 \times 128$     & $\text{relu}(x)$  \\ \hline
    Ant              & 20          & 400              & 40  & 1.0   & $128 \times 64$    & $\text{relu}(x)$ \\ \hline
    Franka Panda     & 40          & 200              & 40  & 1.0   & $32 \times 24$     & $\text{relu}(x)$  \\ \hline
    \bottomrule
    \end{tabular}
    \vspace{5mm}
    \caption{
        Parameters for all examples used in the paper (only when applicable). Each example using the deterministic
        variation of hybrid learning (Alg.~\ref{alg:hlt_det}) uses added action exploration of the form $\epsilon=0.999^t$
        where $t$ is the total number of environment interactions.
    }
    \label{tab:param}
    \end{table}

    \paragraph{Policy:} For the policy, we parameterize a normal distribution with a mean function defined as a single
    layer network with $\sin(x)$ nonlinearity with $128$ nodes (similar to the dynamics model used). The diagonal of the
    variance is specified using a single layer with $128$ nodes and rectifying linear unit activation function. Soft
    actor critic (SAC) is used to optimize the policy for the pendulum, cartpole, hopper, and half-cheetah environments
    respectively. All examples use the same hyperparameters for SAC specified by the shared parameters in~\cite{haarnoja2018soft}
    including the structure of the value and soft Q functions, and excluding the batch size and policy (which we
    match the $128$ samples used with model learning and to utilize the simpler policy representation).

    The ant and panda robot with behavior cloning use the policy structure defined in Table~\ref{tab:param}, which
    is structured in a similar stochastic parameterization as mentioned in the paragraph above. The negative log loss
    of the normal distribution is used for behavior cloning expert demonstrations with a learning rate of $0.01$ for
    each method.

    \paragraph{Robot Experiments:} In all robot experiments, a camera is used to identify the location of objects in the
    environment using landmark tags and color image processing. For the Sawyer robot example, the state is defined as the pose
    of the end-effector of the arm to the block as well as the pose of the block to the target. The action space is the
    target end-effector velocity. The reward is defined as
    \begin{equation*}
        r(s, a) = - 5 \Vert p_\text{b2t} \Vert - 10 \Vert p_\text{ee2b}\Vert - 0.01 \Vert a \Vert^2
    \end{equation*}
    where $p_\text{b2t},  p_\text{ee2b}$ denote the poses of the block to the target and the end-effector to the target
    location respectively.

    For the Franka robot, the state is defined as the end-effector position, the block position, and the gripper state
    (open or closed) as well as the measured wrench at the end-effector. The action space is defined as the commanded end-effector
    velocity. The reward function is defined as
    \begin{equation*}
        r(s,a) = r_\text{stage}(s) - 1.0e^{-6} \left( \Vert F_\text{ee} \Vert + \Vert a \Vert \right)
    \end{equation*}
    where
    \begin{equation*}
        r_\text{stage}(s) =
        \begin{cases}
            -1.25 \Vert p_\text{ee} - p_\text{stack} \Vert , & \text{if grasped block} \\
            - \Vert p_\text{ee} - p_\text{block} \Vert ,         & \text{if not grasped block}
        \end{cases}
    \end{equation*}
    denotes the stage at which the Franka is in at the block stacking task. Here, $p_\text{ee}, p_\text{block}, p_\text{stack}$, and $F_\text{ee}$
    denote the end-effector pose, the block pose, the target stacking position, and the measured wrench at the end-effector respectively.

    \begin{algorithm}[!h]
    \caption{Hybrid Learning (stochastic) with Behavior Cloning}
    \begin{algorithmic}[1]
        \State Randomly initialize continuous differentiable models $f$, $r$ with parameters $\psi$ and policy $\pi$ with
        parameter $\theta$. Initialize memory buffer $\mathcal{D}$ and expert data buffer $\mathcal{D}_\text{exp}$,
        prediction horizon parameter $H$.
        \While{task not done}
            \State $\triangleright$ get expert demonstrations
            \For{$t = 0, \ldots, T-1$}
                \State observe state $s_t$, expert action $a_t$
                \State observe $s_{t+1}, r_t$ from environment
                \State $\mathcal{D}_\text{exp} \gets \{ s_t, a_t, r_t, s_{t+1}\}$
                \State $\mathcal{D}  \gets \{ s_t, a_t, r_t, s_{t+1}\}$
            \EndFor
            \State $\triangleright$ update models using data
            \State update $\psi$ using $\mathcal{D}$ any regression method
            \State update $\theta$ using $\mathcal{D}_\text{exp}$ with behavior cloning
            \State $\triangleright$ test in environment
            \For{$t = 0, \ldots, T-1$}
                \State observe state $s_t$
                \State get action $a_t$ Alg.~\ref{alg:hlt_det} or~\ref{alg:hlt_stoch}
                \State observe $s_{t+1}, r_t$ from environment
                \State $\mathcal{D}  \gets \{ s_t, a_t, r_t, s_{t+1}\}$
            \EndFor
            \State if task not done, continue
        \EndWhile
    \end{algorithmic}
    \label{alg:hybrid_imitation}
    \end{algorithm}

% \newpage
%     {\footnotesize
%     \bibliography{references}
%     }

\end{document}